\documentclass[10pt,twocolumn,letterpaper]{article}

\usepackage{cvpr}      

\usepackage{booktabs}
\usepackage{multicol}
\usepackage{colortbl}
\usepackage{multirow}
\usepackage{amsthm}
 
\newtheorem{lemma}{Lemma} 

\usepackage{amssymb}
\usepackage{pifont}
\usepackage{comment}
\usepackage[normalem]{ulem}
\usepackage{xcolor}
\usepackage{algorithmic}
\usepackage{algorithm}
\usepackage{wrapfig}
\usepackage{amsthm,amsmath,amssymb,,bm,bbm}
\usepackage{mdframed}
\usepackage{multicol} 
\usepackage{bbding}
\theoremstyle{plain}
\theoremstyle{definition}
\usepackage{makecell}
\newmdtheoremenv{prop}{Proposition}

\newmdtheoremenv{definition}[theorem]{Definition}

\theoremstyle{remark}
\newmdtheoremenv{corollary}{Corollary}[theorem]
\definecolor{cvprblue}{rgb}{0.21,0.49,0.74}
\usepackage[pagebackref,breaklinks,colorlinks,allcolors=cvprblue]{hyperref}

\def\paperID{14193} 
\def\confName{CVPR}
\def\confYear{2026}

\title{OTPrune: Distribution-Aligned Visual Token Pruning via Optimal Transport}

\author{Xiwen Chen$^{1,2}$\thanks{\textit{These authors contributed equally to this paper.}} \enspace Wenhui Zhu$^{3*}$\thanks{\textit{Now at LinkedIn.}} \enspace  Gen Li$^{2}$ \enspace Xuanzhao Dong$^{3}$ \enspace Yujian Xiong$^{3}$ \enspace  Hao Wang$^{2}$ \\  Peijie Qiu$^{4}$  
\enspace  Qingquan Song$^{5}$ \enspace Zhipeng Wang$^{6\dagger}$\thanks{Corresponding author: \texttt{zhipeng.wang@alumni.rice.edu}}  \enspace Shao Tang$^{7}$ \enspace 
  Yalin Wang$^{3}$ \enspace Abolfazl Razi$^{2}$\thanks{Corresponding author: \texttt{arazi@clemson.edu}}\\
$^{1}$ Morgan Stanley,
$^{2}$ Clemson University,  
$^{3}$ Arizona State University, \\
$^{4}$ Washington University in St. Louis, 
$^{5}$ Texas A\&M University, \\
$^{6}$ Rice University 
$^{7}$ Florida State University
}

\begin{document}
\maketitle

\begin{abstract}
Multi-modal large language models (MLLMs) achieve strong visual–language reasoning but suffer from high inference cost due to redundant visual tokens. 
Recent work explores visual token pruning to accelerate inference, while existing pruning methods overlook the underlying distributional structure of visual representations. We propose OTPrune, a training-free framework that formulates pruning as distribution alignment via optimal transport (OT). By minimizing the 2-Wasserstein distance between the full and pruned token distributions, OTPrune preserves both local diversity and global representativeness while reducing inference cost. Moreover, we derive a tractable submodular objective that enables efficient optimization, and theoretically prove its monotonicity and submodularity, providing a principled foundation for stable and efficient pruning. We further provide a comprehensive analysis that explains how distributional alignment contributes to stable and semantically faithful pruning. Comprehensive experiments on wider benchmarks demonstrate that OTPrune achieves superior performance–efficiency tradeoffs compared to state-of-the-art methods. The code is available at \url{https://github.com/xiwenc1/OTPrune}.

\end{abstract}

\begin{figure}
    \centering
    \includegraphics[width=0.8\linewidth]{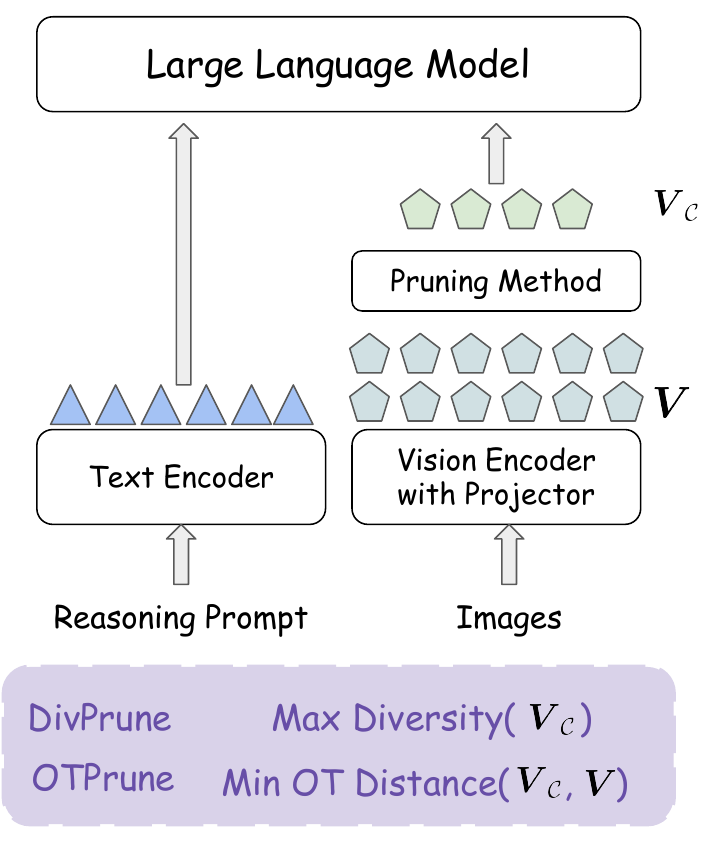}
    \caption{
\textbf{Overview of the proposed OTPrune framework.}
Given visual and textual inputs, a vision encoder with a projector produces image tokens that are fed into a large language model together with text tokens and reasoning prompts.
Existing methods, such as \textit{DivPrune}, select tokens by maximizing diversity, which may overlook global representativeness.
In contrast, \textit{OTPrune} formulates pruning as \emph{distribution alignment} by minimizing the 2-Wasserstein (optimal transport) distance between the full and pruned token distributions, thereby preserving both local diversity and global structure for efficient and semantically faithful multimodal reasoning.
}
    \label{fig:diff}
\end{figure}

\section{Introduction}
Large multimodal models (LMMs) have emerged as a central paradigm for integrating visual and linguistic understanding~\cite{clip,llava_1.5,llavanext,llavanextvideo}.
A prominent subset of these models, known as multimodal large language models (MLLMs), extends LLMs by incorporating visual encoders and cross-modal alignment modules.
Recent representatives such as LLaVA~\cite{llava_1.5,llavanext}, GPT-4V~\cite{openai2023gpt4}, and Gemini~\cite{gemini} have demonstrated remarkable capabilities in visual question answering, image captioning, and open-ended reasoning~\cite{videollama2,lin2023video}.
These models typically transform both textual and visual inputs into token sequences that are jointly processed by a transformer-based decoder~\cite{vaswani2017attention,alayrac2022flamingo}.
However, the number of visual tokens in MLLMs often far exceeds that of textual ones.
A single image can generate hundreds of patch-level tokens~\cite{dosovitskiy2020vit}, which greatly increases the overall sequence length.
Since the self-attention operation in transformers scales quadratically with sequence length~\cite{vaswani2017attention}, longer multimodal contexts, especially in multi-turn reasoning or dialogue scenarios, lead to substantially higher computational and memory costs during inference~\cite{kwon2023vllm,cachedattention2024,soma2025}.

A growing body of work aims to mitigate this bottleneck through visual token pruning, which removes redundant vision tokens while preserving task-relevant semantics~\cite{kong2025token}.
Empirical studies reveal that visual tokens contain redundancy, allowing a large portion of tokens (e.g., 70–90\%) to be pruned with little impact on accuracy~\cite{fastv,PruMerge,vtw}. A variety of pruning strategies have been proposed to reduce the inference cost of large multimodal models (LMMs), including attention-based~\cite{fastv,PruMerge,guo2024attention,lin2025boosting}, calibration-based~\cite{FitPrune,vtw}, and trainable approaches~\cite{m3,li2025tokenpacker}.
Among them, DivPrune~\cite{divprune} takes a notable step forward by introducing a diversity-driven criterion that favors dissimilar tokens, effectively reducing redundancy and improving generalization.
However, diversity alone is not sufficient to ensure representativeness. \textit{While DivPrune enforces separation among selected tokens, it neglects the global relationship between the pruned subset and the full token distribution.} Consequently, the retained subset may fail to capture the underlying covariance structure or semantic coverage of the original vision tokens, especially under high pruning ratios. In other words, diversity promotes dissimilarity within the subset but does not guarantee that the subset faithfully approximates the full feature manifold.

To address this, we argue that effective pruning should account for both local diversity and global distributional alignment.
Instead of merely selecting distant tokens, we aim to retain those that best represent the geometry of the full token space.
To this end, we propose \textit{OTPrune}, a principled and training-free framework that formulates token pruning as a distribution approximation problem based on \textit{optimal transport (OT) problem}.
The selected subset is required to induce a distribution that closely matches that of all tokens, achieved by minimizing the \textit{2-Wasserstein distance} between their empirical distributions.
This formulation explicitly aligns the geometric and statistical structures of tokens in the embedding space, ensuring both diversity and representativeness. An overview of the framework is shown in Fig.~\ref{fig:diff}.

However, directly minimizing the Wasserstein distance over discrete subsets is computationally intractable. To solve this, we approximate the token distributions with Gaussian surrogates that match their covariance statistics, leading to a closed-form expression for the 2-Wasserstein distance. We then show that the optimization objective can be effectively optimized by maximizing the log-determinant of the covariance alignment matrix between the selected and full tokens, which serves as a tractable lower-bound surrogate of the trace objective, i.e., \textit{second-order OT surrogate}.
The resulting criterion is monotone and submodular, enabling efficient greedy maximization via Cholesky decomposition with polynomial complexity. In summary, our contributions are as follows:
\begin{itemize}
    \item We are the first to formulate vision token pruning as a distribution alignment problem by minimizing the 2-Wasserstein distance between the full and pruned token distributions, bridging the gap between local diversity and global representativeness. Empirical analysis confirms that this formulation yields more semantically faithful pruning behavior.

\item We derive a tractable \textit{second-order OT surrogate} of the Wasserstein distance that yields a tractable log-determinant objective, which is provably monotone and submodular. Empirical results on synthetic data further demonstrate that this surrogate closely aligns with the true optimal transport objective, providing both theoretical grounding and practical fidelity.

\item Comprehensive empirical evaluation and analysis.
Across diverse image and video benchmarks, OTPrune achieves superior accuracy–efficiency trade-offs compared to state-of-the-art methods such as FastV, VTW, and DivPrune. 
\end{itemize}

\section{Related Works}
\subsection{Multi-modal large language models (MLLMs)}
Multimodal large language models (MLLMs) have recently emerged as a general framework for understanding and reasoning across different modalities, including text, image, audio, and video~\cite{llava_1.5,llavanext,llavanextvideo,lin2023video,videollama2,gpt40,gemini}.
They extend large language models (LLMs) with visual or auditory encoders, allowing them to process multimodal inputs in a unified token space.
Such models have shown strong performance on tasks like visual question answering, image and video captioning, and visual grounding, as well as more structured tasks such as segmentation and visual editing~\cite{wei2025instructseg,zhang2025instructvedit}.

\subsection{Efficient MLLMs}
Recent studies have explored various strategies to improve the inference efficiency of LMMs. 
Architectural optimization methods include replacing transformer backbones with more efficient state-space models such as Mamba~\cite{vlmamba, mamba}, retraining with smaller-scale language models~\cite{tinygptv, tinyllava}, and applying knowledge distillation for model compression~\cite{llavadi}. 
At the inference level, computation skipping~\cite{shukor2024skip} and efficient decoding techniques like speculative decoding~\cite{speculative2024} have been proposed to accelerate reasoning while preserving performance. 
However, these methods require retraining or architectural changes, which limit their adaptability to general MLLMs. 
Alternatively, visual token pruning~\cite{fastv,vtw,zhu2025evtp} improves efficiency by removing redundant visual tokens from pretrained LMMs without altering the architecture, motivating our scalable and lightweight design.

\subsection{Visual Token Pruning}
Visual token pruning has become an effective way to reduce the inference cost of large multimodal models (LMMs). Early studies mainly rely on attention-based pruning, where tokens with low attention responses are removed. PruMerge~\cite{PruMerge} and FastV~\cite{fastv} follow this idea, but later analyses~\cite{guo2024attention,lin2025boosting} show that attention magnitude does not always reflect true token importance, especially under high pruning ratios.
Another direction is calibration-based pruning, where pruning ratios or layers are decided using a small calibration set, as in FitPrune~\cite{FitPrune} and VTW~\cite{vtw}. FitPrune averages attention maps across samples to build dataset-level statistics, but attention is inherently input-dependent, making these statistics unstable and less generalizable to unseen data. Such methods rely on global heuristics rather than modeling instance-level structure, which limits their robustness across inputs.
Fine-tuning-based approaches, such as M³~\cite{m3} and TokenPacker~\cite{li2025tokenpacker}, improve adaptivity by learning hierarchical or compact token representations, though at significant computational cost. More recently, DivPrune~\cite{divprune} enhances pruning by encouraging token diversity, yet it focuses on local dissimilarity and overlooks global distributional alignment, which is crucial for maintaining representativeness.
These observations suggest that effective pruning should balance local diversity and global distributional structure without relying on extra calibration or training. Motivated by this, we introduce an optimal transport–based pruning framework that preserves the global feature distribution while removing redundant tokens. This design enables efficient and semantically faithful pruning, and we further provide theoretical analysis that explains why the optimal transport formulation yields a stable and principled token selection process.

\section{Method}

\subsection{Preliminary} 
Multimodal inputs typically generate highly redundant token sequences. For a text input, the encoder produces a sequence $\boldsymbol{T} = \{t_1, \dots, t_n\}$ while for a visual input the encoder produces
$\boldsymbol{V} = \{v_1, \dots, v_m\}\in \mathbb{R}^{m \times d}$.
Although $n$ and $m$ may be large, many tokens contribute only marginally to the semantic meaning required for the downstream task. Transmitting all tokens thus wastes valuable communication resources. We can formulate visual token selection as follows. Specifically, we aim to select subsets $\mathcal{C} \subseteq [m]$ such that the selected tokens $\boldsymbol{V}_\mathcal{C}\in \mathbb{R}^{k \times d}$ form semantically representative summaries of their original sequences, where $k$ is the computation budget, i.e., the maximum number of tokens that can be selected. , and $d$ denotes the latent dimensions.

Formally, let $f(\cdot)$ denote the downstream LLM task model and $\mathcal{L}(\cdot,\cdot)$ denote a task-specific loss function (e.g., cross-entropy for classification or semantic distance for generation). The vision token pruning problem can be written as:
\begin{align}\label{eq:ori_problem}
\min_{\mathcal{C}} \; \mathcal{L}\!\left(f(\boldsymbol{T}, \boldsymbol{V}_\mathcal{C}), \; f(\boldsymbol{T}, \boldsymbol{V})\right) 
\quad \text{s.t.} \quad  |\mathcal{C}| = k.
\end{align}

\subsection{Pruning as OT Problem}
While Eq.~\ref{eq:ori_problem} formulates pruning in terms of the downstream task loss, such loss is typically infeasible to compute during inference. 
To enable a task-agnostic formulation, we instead cast pruning as the problem of approximating the distribution of all tokens using that of a pruned subset. 
We hypothesize that \textit{a closer approximation of the full distribution leads to better downstream task performance.}

Let $P$ denote the distribution induced by the full token set and $Q$ the distribution induced by the selected subset. 
We assume $Q$ is defined as the empirical distribution of the selected tokens with uniform weights ($\tfrac{1}{k}$ per token), ensuring that its total mass is normalized to 1. 
Our goal is to find a subset $\mathcal{C}$ such that $Q$ closely matches $P$ under an appropriate discrepancy measure. 
Specifically, we focus on minimizing the squared 2-Wasserstein distance:
\begin{align}\label{eq:w1}
\min_Q \mathcal{W}_2^2(P,Q) = \inf_{\pi \in \Pi(P,Q)} \mathbb{E}_{(x,y)\sim \pi}\!\left[\|x - y\|_2^2\right],
\end{align}
which measures the geometric discrepancy between distributions.

\noindent\textbf{Hypothesis Validation.} 
To validate our hypothesis that a better distributional approximation leads to improved downstream performance, 
we conduct a preliminary study using LLaVA~1.5-7B across 11 multimodal reasoning and understanding benchmarks (see Sec.~\ref{sec:exp_setup} for details of datasets). 
Given a sequence of $m$ tokens, we sample $k$ tokens ($k < m$) using several simple index-based strategies. 
\textit{First-K sampling} selects the first $k$ tokens from the sequence, retaining the earliest portion. 
\textit{Last-K sampling} keeps the last $k$ tokens, focusing on the most recent part of the sequence. 
\textit{Uniform index sampling} evenly samples from the index range, ensuring that the selected $k$ tokens are uniformly distributed across the entire sequence. 
We also include \textit{DivPrune}~\cite{divprune}, the recent state-of-the-art pruning method, and random selection for comparison. For each strategy, we compute two rankings: (i) the rank of the OT distance between the selected subset and the full token set (lower rank indicates closer distributional alignment), and (ii) the rank of the downstream task performance (higher accuracy corresponds to a lower rank). 
We then calculate Spearman's rank correlation between these two rankings. 
As shown in Fig.~\ref{fig:draw_motivation}, the strong positive correlation confirms that subsets achieving smaller OT distances tend to yield higher downstream performance, supporting our distribution-alignment hypothesis.

\begin{figure*}
    \centering
    \includegraphics[width=1\linewidth]{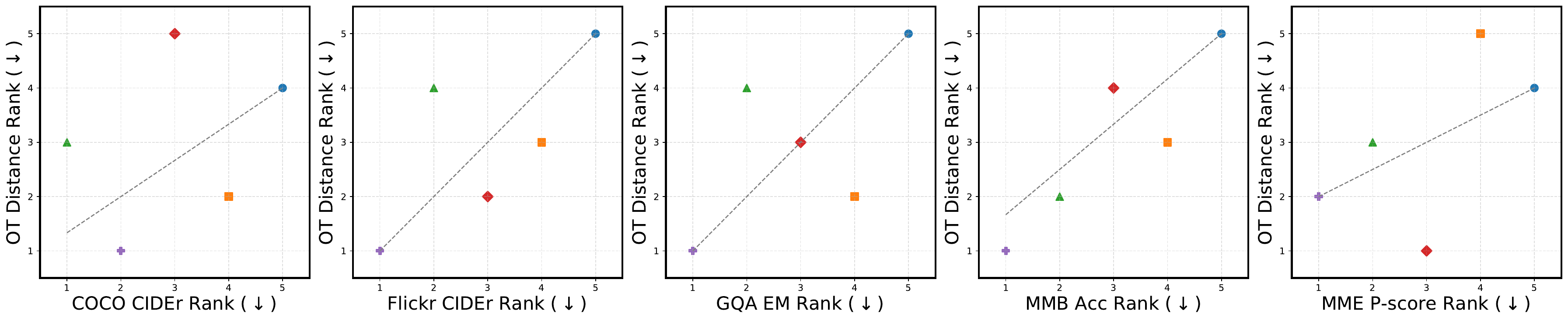}
    \caption{
\textbf{Correlation between OT distance and downstream performance.} 
We evaluate several manually designed selection strategies (\textit{First-K}, \textit{Last-K}, \textit{Uniform}, and \textit{Random}) along with the SOTA method \textit{DivPrune}~\cite{divprune}. 
For each strategy, we compute the OT distance between the selected subset and the full token set and measure downstream performance across 11 multimodal benchmarks. 
Both OT distance (lower distance $\Rightarrow$ lower rank) and task performance (higher score $\Rightarrow$ lower rank) are ranked to compute Spearman’s correlation. 
A strong positive correlation confirms that smaller OT distance corresponds to higher performance. 
}

    \label{fig:draw_motivation}
\end{figure*}

\subsection{OTPrune Selector}\label{sec:OTselector}
For analytic tractability, we approximate the OT distance by a Gaussian surrogate. Motivated by moment-matching approaches~\cite{chen2020homm}, we adopt a second-order approximation. Following the analytical convenience in~\cite{yu2020learning}, we use zero-mean Gaussians to represent the true token distributions: $P \approx \mathcal{N}(0, \Sigma)$ and $Q \approx \mathcal{N}(0, \Sigma_{\mathcal{C}})$. 
In this case, the squared 2-Wasserstein distance in Eq.~\ref{eq:w1} admits a closed form:
\begin{align}\label{eq:w2}
\mathcal{W}_2^2(P,Q) 
= \operatorname{tr}(\Sigma) + \operatorname{tr}(\Sigma_\mathcal{C}) 
- 2\,\operatorname{tr}\!\Big((\Sigma^{1/2}\Sigma_\mathcal{C}\Sigma^{1/2})^{1/2}\Big).
\end{align}
To avoid scaling bias and ensure fair comparison across features, we normalize each embedding dimension to unit variance. This yields $\Sigma_{ii}=1$ and $\operatorname{tr}(\Sigma) = d$, leaving the distance primarily sensitive to mismatches in covariance structure rather than absolute scaling. Equivalently, we are maximizing the following objective function $f(\mathcal{C})$:
\begin{align}\label{eq:w3}
    \max_{\mathcal{C}} ~ \Big(\underbrace{\operatorname{tr}(\Sigma^{1/2}\Sigma_\mathcal{C}\Sigma^{1/2})^{1/2}}_{f(\mathcal{C})}\Big)^2\quad \text{s.t.} \quad  |\mathcal{C}| = k.
\end{align}
Although $\mathcal{W}_2^2$ is a principled distortion metric, it is still computationally expensive to optimize over discrete subsets.

To this end, instead of directly optimizing the original objective, we consider a more \textit{tractable lower bound}. 
Specifically, we introduce the $\gamma$-log-determinant operator for a positive semidefinite square matrix $\boldsymbol{X}$:
\begin{equation}
\Psi(\boldsymbol{X}) = \log\det(\boldsymbol{I} + \gamma \boldsymbol{X})
= \sum_{i=1}^{d} \log(1 + \gamma \lambda_i),
\end{equation}
where $\{\lambda_i\}$ are the eigenvalues of $\boldsymbol{X}$ and $\gamma > 0$ is a fixed scaling parameter. 
Since $\boldsymbol{X}$ is positive semidefinite (so $\lambda_i \ge 0$), we have the following inequalities (\textcolor{Periwinkle}{\textbf{See Appendix A.1 for the proof}}),
\begin{equation}\label{eq:approx}
\Psi(\boldsymbol{X}) \le \gamma\,\operatorname{tr}(\boldsymbol{X})\leq \gamma (\operatorname{tr}(\boldsymbol{X}^{1/2}))^2,
\end{equation}
indicating that $\Psi(\boldsymbol{X})$ serves as a lower bound of the trace objective. 
For small $\gamma$, this bound becomes tight because $\log(1+\gamma \lambda_i) \approx \gamma \lambda_i$ when $x$ is close to zero, leading to $\Psi(\boldsymbol{X}) \approx \gamma\,\operatorname{tr}(\boldsymbol{X})$. Thus, since $\Sigma^{1/2}\Sigma_\mathcal{C}\Sigma^{1/2} \succeq 0$, 
we can safely apply $\boldsymbol{X}=\Sigma^{1/2}\Sigma_\mathcal{C}\Sigma^{1/2}$ to the above inequality and obtain the following surrogate objective\footnote{We deliberately avoid using 
$\boldsymbol{X}=(\Sigma^{1/2}\Sigma_{\mathcal{C}}\Sigma^{1/2})^{1/2}$ 
to compute $\Psi(\boldsymbol{X})$ because evaluating matrix square roots
(e.g., via SVD or eigendecomposition) would incur significant overhead 
during greedy subset selection. Instead, we adopt 
$\boldsymbol{X}=\Sigma^{1/2}\Sigma_{\mathcal{C}}\Sigma^{1/2}$,
which yields a tractable surrogate with the same monotonicity and 
submodularity properties, while preserving the lower-bound relationship 
in Eq.~\ref{eq:approx}.}:
\begin{align}\label{eq:new_objective}
   \max_{\mathcal{C}} \;
   \underbrace{\log\det\!\big(\boldsymbol{I} + \gamma\,\Sigma^{1/2}\Sigma_\mathcal{C}\Sigma^{1/2}\big)}_{\tilde{f}(\mathcal{C})}.
\end{align}

 Since $\Sigma\approx \frac{\boldsymbol{V}^T\boldsymbol{V}}{m},\Sigma_\mathcal{C}\approx \frac{\boldsymbol{V}_\mathcal{C}^T\boldsymbol{V}_\mathcal{C}}{k}$, and by leveraging the \textit{Sylvester’s determinant identity} $\log \det\!\big(I + \boldsymbol{X}\boldsymbol{X}^T\big)= \log \det\!\big(I + \boldsymbol{X}^T\boldsymbol{X}\big)$ for any matrix $\boldsymbol{X}$~\cite{petersen2008matrix},
we can rewrite Eq.~\ref{eq:new_objective} as 
\begin{align}\label{eq:kernel}
    &\log \det\!\big(I + \gamma \Sigma^{1/2}\Sigma_\mathcal{C}\Sigma^{1/2}\big)  \\ \nonumber
    \approx& \log \det\!\big(I + \frac{\gamma}{mk} (\boldsymbol{V}^T\boldsymbol{V})^{1/2}\boldsymbol{V}_\mathcal{C}^T\boldsymbol{V}_\mathcal{C}(\boldsymbol{V}^T\boldsymbol{V})^{1/2}\big)  \\ \nonumber
     =&  \log \det\!\big(I + \frac{\gamma}{mk} \boldsymbol{V}_\mathcal{C}(\boldsymbol{V}^T\boldsymbol{V})^{1/2}(\boldsymbol{V}^T\boldsymbol{V})^{1/2}\boldsymbol{V}_\mathcal{C}^T\big) \\ \nonumber
     =&\log \det\!\big(I + \frac{\gamma}{mk} \boldsymbol{V}_\mathcal{C}\boldsymbol{V}^T\boldsymbol{V}\boldsymbol{V}_\mathcal{C}^T\big) 
     \\ \nonumber
     =& \log \det\!\big(I + \frac{\gamma}{mk} \boldsymbol{\tilde{V}}_\mathcal{C}\boldsymbol{\tilde{V}}_\mathcal{C}^T\big),
\end{align}
where $\boldsymbol{\tilde{V}}_\mathcal{C}=\boldsymbol{V}_\mathcal{C}\boldsymbol{V}^T$ and $\tilde{\gamma}= \frac{\gamma}{mk}$.
Therefore, now we are targeting to maximize $\log \det\!\big(I + \tilde{\gamma}\boldsymbol{V}_\mathcal{C}\boldsymbol{V}^T\boldsymbol{V}\boldsymbol{V}_\mathcal{C}^T\big)$, subject to $ |\mathcal{C}| = k$. Notably, this objective defines a monotone submodular set function 
with respect to $\mathcal{C}$ (\textcolor{Periwinkle}{\textbf{See Appendix A.2 for the proof}}). Due to optimizing this objective being NP-hard, we therefore adopt a greedy approximation strategy based on Cholesky decomposition~\cite{chen2018fast}. 
Under this strategy, the $t$-th greedy iteration updates the
$(m-t)$ remaining items, each requiring an $O(t)$ inner product
between the current Cholesky coefficients, resulting in a per-iteration
cost of $O((m-t)t)$. Summing over $t=1,\dots,k$ gives a total greedy
complexity of $O(mk^2)$.
Formally, we give the algorithmic summary in Algorithm~\ref{alg:GOT-solver}. 
\noindent\textbf{Theoretical Bound.} Due to the submodular nature of the objective function defined in Eq.~\ref{eq:approx}, the greedy selection procedure enjoys the classical $(1 - 1/e)$ approximation worst-case guarantee with respect to the optimal subset~\cite{krause2007near}:
\begin{equation}
    \tilde{f}(\mathcal{C}_{\text{greedy}}) \ge (1 - e^{-1})\, \tilde{f}(\mathcal{C}^{*}),
\end{equation}
where $\tilde{f}(\mathcal{C})$ denotes the objective value for a subset $\mathcal{C}$ computed from Eq.~\ref{eq:approx}, $\mathcal{C}_{\text{greedy}}$ is the subset selected by our algorithm, and $\mathcal{C}^{*}$ is the optimal subset. Since Eq.~\ref{eq:approx} serves as a close surrogate to the original objective in Eq.~\ref{eq:w3}, the same approximation bound is expected to hold in practice. Moreover, as demonstrated in Section~\ref{sec:analysisOfAlg}, our empirical analysis shows that the greedy algorithm often achieves a substantially higher optimality ratio than the theoretical $(1 - 1/e)$ bound.

\begin{algorithm}[h]
\caption{OTPrune (\textcolor{Periwinkle}{\textbf{See Appendix B for details}})} 
\label{alg:GOT-solver}
\begin{algorithmic}[1]
\REQUIRE Vision tokens $\boldsymbol{V} \in \mathbb{R}^{m \times d}$, size $k$, scalar $\tilde{\gamma}$
\ENSURE Selected index set $\mathcal C$
\STATE Precompute $\boldsymbol{\tilde{V}} = \boldsymbol{V}\boldsymbol{V}^\top $, 
with rows $\boldsymbol{w}_i^\top$ for $i=1,\dots,m$, i.e. $\boldsymbol{w}_i = \boldsymbol{\tilde{V}}[i,:]^\top$
\STATE Initialize $\mathcal C \gets \emptyset$, 
$d_i^2 \gets 1+\tilde{\gamma}\|\boldsymbol{w}_i\|_2^2$ for all $i$, 
and $\boldsymbol{c}_i \gets \emptyset$
\WHILE{$|\mathcal C| < k$}
    \STATE $j \gets \arg\max_{i \notin \mathcal C} d_i^2$
    \STATE $\mathcal C \gets \mathcal C \cup \{j\}$
    \FOR{each $i \notin \mathcal C$}
        \STATE $e_i \gets 
        \dfrac{\tilde{\gamma}\,\langle \boldsymbol{w}_j, \boldsymbol{w}_i\rangle 
        - \langle \boldsymbol{c}_j, \boldsymbol{c}_i\rangle}{d_j}$
        \STATE $\boldsymbol{c}_i \gets [\boldsymbol{c}_i \;\; e_i]$
        \STATE $d_i^2 \gets d_i^2 - e_i^2$
    \ENDFOR
\ENDWHILE
\RETURN $\mathcal C$
\end{algorithmic}
\end{algorithm}

\setlength\heavyrulewidth{0.25ex}
\begin{table*}[tb!]
\centering
\small

\caption{
\textbf{Comparison of \textit{OTPrune} with diverse baselines across 11 multimodal benchmarks.} DivPrune$^{*}$ aligns its token retention rate with PruMerge. For fair evaluation, all one-shot methods strictly retain 9.8\% of visual tokens, using TFLOPs to indicate computational efficiency. The \textcolor{red}{\textit{Avg. Rank $\downarrow$}} assesses all evaluated methods, while the \textcolor{red}{\textit{One-shot Rank $\downarrow$}} strictly compares one-shot techniques. Our \textit{OTPrune} consistently delivers the best performance. (${\bullet}$: requires finetuning; ${\bigtriangleup}$: requires calibration).
}
\vspace{3pt}

\resizebox{\textwidth}{!}{%
\begin{tabular}{c|cc|ccccccccccc|cc}
\toprule
\multirow{2}{*}{} & \multirow{2}{*}{Method} & {TFLOP} & COCO  & Flickr & GQA & MMB & MME & MMMU & Nocaps & OKVQA & POPE & SQA  & SEEDB & {\textbf{Avg.}} & {\textbf{One-shot}} \\ 
    &          &   (ratio~\%)   & CIDEr & CIDEr  & EM    & Acc & P-score & Acc   & CIDEr  & EM    & F1 & EM    & Acc & \textbf{Rank}$\downarrow$ & \textbf{Rank} $\downarrow$ \\ 
\midrule

\multirow{15}{*}{\rotatebox[origin=c]{90}{\textbf{LLaVA 1.5-7B}}} 

& \textbf{Original}  
& \cellcolor[gray]{0.9}3.228 (100.00) 
& \cellcolor[gray]{0.9}1.10  & \cellcolor[gray]{0.9}0.75 & \cellcolor[gray]{0.9}61.96 & \cellcolor[gray]{0.9}64.09 
& \cellcolor[gray]{0.9}1506 & \cellcolor[gray]{0.9}36.44 
& \cellcolor[gray]{0.9}1.06 & \cellcolor[gray]{0.9}53.39 & \cellcolor[gray]{0.9}85.84 
& \cellcolor[gray]{0.9}69.41 & \cellcolor[gray]{0.9}66.17 & \cellcolor[gray]{0.9}-- & \cellcolor[gray]{0.9}-- \\[2pt]

& \multicolumn{15}{l}{\hspace{0.3em}\textcolor{gray}{\scriptsize\textit{Category: One-shot Pruning Methods}}} \\[-2pt]
& \hspace{0.8em}VTW~\cite{vtw} & 0.603 (18.46) & 0.05 & 0.03 & 38.94 & 21.31 & 681 & 32.60 & 0.03 & 18.64 & 25.35 & 65.29 & 36.13 & 8.23 & 3.41\\
& \hspace{0.8em}FastV~\cite{fastv} & 0.514 (15.69) & 0.06 & 0.03 & 38.73 & 20.62 & 696 & 32.00 & 0.04 & 18.32 & 32.84 & 65.15 & 35.69 & 8.41 & 3.59\\
& \hspace{0.8em}DivPrune & 0.512 (15.63) & 0.96 & 0.62 & \textbf{56.85} & 59.19 & 1328 & \textbf{35.89} & 0.92 & 46.98 & \textbf{86.02} & 68.27 & \textbf{59.47} & 3.23 & 1.64\\
& \hspace{0.8em}\textbf{OTPrune (Ours)} & 0.512 (15.63) 
& \textbf{1.00} & \textbf{0.68} & 56.62 & \textbf{60.40} & \textbf{1368} & 35.44 
& \textbf{0.99} & \textbf{47.09} & 79.59 & \textbf{68.42} & 59.31 & \textcolor{red}{\textbf{2.36}} & \textcolor{red}{\textbf{1.36}} \\[3pt]

& \multicolumn{15}{l}{\hspace{0.3em}\textcolor{gray}{\scriptsize\textit{Category: Adaptive Pruning Methods}}} \\[-2pt]
& \hspace{0.8em}PruMerge~\cite{PruMerge} & Variable & 0.77 & 0.50 & 51.30 & 54.47 & 1259 & 35.11 & 0.73 & 41.74 & 66.89 & 68.91 & 53.26 & 6.09 & --\\
& \hspace{0.8em}DivPrune$^{*}$ & Variable & 0.91 & 0.56 & 55.25 & 58.16 & 1330 & 35.44 & 0.87 & 44.38 & 83.06 & 67.87 & 57.88 & 4.73 & --\\[2pt]

& \multicolumn{15}{l}{\hspace{0.3em}\textcolor{gray}{\scriptsize\textit{Category: Finetuning-based Methods}}} \\[-2pt]
& \hspace{0.8em}$\text{FitPrune}^{\bigtriangleup}$~\cite{FitPrune} & 0.513 (15.65) & 0.90 & 0.56 & 52.39 & 57.65 & 1197 & 36.00 & 0.86 & 42.53 & 60.89 & 68.02 & 54.84 & 5.59 & --\\
& \hspace{0.8em}$\text{M}^{3\bullet}$~\cite{m3} & 0.512 (15.63) & 1.00 & 0.67 & 60.81 & 65.81 & 1391 & 31.80 & 0.95 & 55.12 & 86.33 & 64.65 & 64.93 & 2.68 & --\\
& \hspace{0.8em}$\text{PruMerge-LoRA}^{\bullet}$ & Variable & 0.96 & 0.63 & 55.96 & 59.88 & 1334 & 34.89 & 0.90 & 47.99 & 77.13 & 68.32 & 57.93 & 3.68 & --\\
\midrule

\multirow{10}{*}{\rotatebox[origin=c]{90}{\textbf{LLaVA 1.5-13B}}} 

& \textbf{Original} 
& \cellcolor[gray]{0.9}6.281 (100.00) 
& \cellcolor[gray]{0.9}1.16 & \cellcolor[gray]{0.9}0.80 & \cellcolor[gray]{0.9}63.33 & \cellcolor[gray]{0.9}68.64 
& \cellcolor[gray]{0.9}1522 & \cellcolor[gray]{0.9}35.67 
& \cellcolor[gray]{0.9}1.09 & \cellcolor[gray]{0.9}58.28 & \cellcolor[gray]{0.9}85.99 
& \cellcolor[gray]{0.9}72.88 & \cellcolor[gray]{0.9}66.82 & \cellcolor[gray]{0.9}-- & \cellcolor[gray]{0.9}-- \\[2pt]

& \multicolumn{15}{l}{\hspace{0.3em}\textcolor{gray}{\scriptsize\textit{Category: One-shot Pruning Methods}}}\\[-2pt]
& \hspace{0.8em}VTW~\cite{vtw} & 1.030 (16.16) & 0.08 & 0.05 & 39.71 & 21.91 & 622 & 32.10 & 0.05 & 22.49 & 0.40 & 66.24 & 38.59 & 6.00 & 4.00\\
& \hspace{0.8em}FastV~\cite{fastv} & 1.003 (15.73) & 0.38 & 0.18 & 44.98 & 37.80 & 942 & 35.11 & 0.33 & 32.14 & 30.02 & 69.96 & 44.95 & 4.82 & 2.91\\
& \hspace{0.8em}DivPrune & 1.002 (15.71) & 1.00 & 0.66 & \textbf{57.29} & 63.40 & 1407 & 34.89 & 0.95 & 53.29 & \textbf{83.43} & 72.34 & 62.04 & 2.14 & 1.91\\
& \hspace{0.8em}\textbf{OTPrune (Ours)} & 1.002 (15.71) 
& \textbf{1.03} & \textbf{0.71} & 57.11 & \textbf{65.55} & \textbf{1411} & \textbf{37.00} 
& \textbf{1.01} & \textbf{54.56} & 79.39 & \textbf{73.82} & \textbf{62.08} & \textcolor{red}{\textbf{1.27}} & \textcolor{red}{\textbf{1.18}} \\[3pt]

& \multicolumn{15}{l}{\hspace{0.3em}\textcolor{gray}{\scriptsize\textit{Category: Adaptive Pruning Methods}}}\\[-2pt]
& \hspace{0.8em}$\text{PruMerge}^{\bigtriangleup}$~\cite{PruMerge} & Variable & 0.80 & 0.53 & 52.01 & 58.93 & 1256 & 36.56 & 0.77 & 49.15 & 64.36 & 72.53 & 56.10 & 3.64 & --\\
& \hspace{0.8em}DivPrune$^{*}$ & Variable & 0.94 & 0.59 & 56.09 & 61.77 & 1344 & 34.89 & 0.91 & 50.86 & 79.60 & 71.34 & 60.00 & 3.14 & --\\
\midrule

\multirow{7}{*}{\rotatebox[origin=c]{90}{\textbf{LLaVA 1.6-7B}}} 

& \textbf{Original} 
& \cellcolor[gray]{0.9}11.849 (100.00) 
& \cellcolor[gray]{0.9}1.00 & \cellcolor[gray]{0.9}0.68 & \cellcolor[gray]{0.9}64.28 & \cellcolor[gray]{0.9}67.01 
& \cellcolor[gray]{0.9}1520 & \cellcolor[gray]{0.9}36.44 
& \cellcolor[gray]{0.9}0.88 & \cellcolor[gray]{0.9}44.20 & \cellcolor[gray]{0.9}86.38 
& \cellcolor[gray]{0.9}70.15 & \cellcolor[gray]{0.9}70.16 & \cellcolor[gray]{0.9}-- & \cellcolor[gray]{0.9}-- \\[2pt]

& \multicolumn{15}{l}{\hspace{0.3em}\textcolor{gray}{\scriptsize\textit{Category: One-shot Pruning Methods}}}\\[-2pt]
& \hspace{0.8em}VTW~\cite{vtw}       & 1.318 (11.23) & 0.06 & 0.03 & 38.62 & 19.76 & 606  & 31.30 & 0.03 & 8.66  & 7.13  & 65.74 & 37.48 & 3.82 & 3.82\\
& \hspace{0.8em}FastV~\cite{fastv}   & 1.327 (11.30) & 0.06 & 0.03 & 38.79 & 20.36 & 619  & 32.56 & 0.04 & 8.80  & 7.78  & 65.49 & 37.62 & 3.18 & 3.18\\
& \hspace{0.8em}DivPrune             & 1.266 (10.79) & 0.89 & 0.61 & 58.69 & 63.49 & 1362 & 37.11 & 0.76 & 41.92 & \textbf{82.97} & \textbf{68.57} & 64.11 & 1.77 & 1.77\\
& \hspace{0.8em}\textbf{OTPrune (Ours)} & 1.266 (10.79)         & \textbf{0.89} & \textbf{0.64} & \textbf{60.28} & \textbf{63.75} & \textbf{1414} & \textbf{38.33} & \textbf{0.84} & \textbf{43.26} & 81.41 & 68.07 & \textbf{64.79} & \textcolor{red}{\textbf{1.23}} & \textcolor{red}{\textbf{1.23}}\\

\bottomrule
\end{tabular}
}
\vspace{-5pt}
\label{tbl:benchmark_comparison_image}
\end{table*}

\section{Empirical Analysis of the Algorithm}
\label{sec:analysisOfAlg}

To assess the empirical behavior of the proposed \textit{OTPrune} algorithm, we conduct controlled experiments using synthetic data. Specifically, we generate multiple datasets with varying numbers of samples and feature dimensions. For each dataset, we also vary the number of selected samples to simulate different pruning ratios. When both the total number of samples $n$ and the selected subset size $k$ are relatively small, exhaustive subset evaluation is tractable, allowing us to enumerate all possible combinations $\binom{m}{k}$. In larger settings, where exhaustive search becomes infeasible, we adopt a Monte Carlo strategy by randomly sampling 10 million candidate subsets to approximate the full search space.

In this analysis, rather than evaluating the approximate objective $\tilde{f}(\cdot)$, we directly evaluate distribution distance via the original objective function defined in Eq.~\ref{eq:w3}, i.e., ${f}(\cdot)$ and assess the performance of each method (OTPrune and DivPrune) using two metrics:  
(i) the \emph{win rate}, and  
(ii) the \emph{optimality ratio}, defined as $f(\tilde{C})/f(C^{*})$, where $\tilde{C}$ denotes the subset selected by the corresponding algorithm and $C^{*}$ represents the optimal subset.
The \emph{win rate} measures the proportion of all possible subset selection strategies whose objective value $f(C)$ is lower than that achieved by the given method:
\[
\text{WinRate} = 
\frac{|\{C \subseteq [m],\, |C|=k \;|\; f(\tilde{C}) > f(C)\}|}{\binom{m}{k}}.
\]
This metric reflects how frequently a method’s selected subset surpasses alternative strategies across the entire combinatorial space.  
It can also be interpreted probabilistically as the expected dominance probability under random subset selection, since the uniform random subset samples each $C$ with equal probability:
$
\text{WinRate} =
\Pr_{C \sim \text{Uniform}(\binom{[n]}{k})}
\big[ f(\tilde{C}) > f(C) \big].
$
Hence, the win rate provides an intuitive statistical interpretation of how consistently an algorithm dominates random selection, while the optimality ratio quantifies its proximity to the best possible subset. 

\begin{table}[htbp]
\centering
\caption{
Comparison between OTPrune and DivPrune on synthetic datasets with varying numbers of samples ($m$), feature dimensions ($d$), and selected subset sizes ($k$). 
The first two configurations are evaluated exhaustively over all $\binom{m}{k}$ combinations, while the last three configurations use Monte Carlo estimation with 10 million different subsets to approximate the full combinatorial space.
}
\label{tab:small-exp1}
\small
\resizebox{0.45\textwidth}{!}{%
\begin{tabular}{cccc|cc}
\toprule
\multicolumn{4}{c}{Configuration} & \multicolumn{2}{c}{Performance} \\ 
\midrule
$m$ & $d$ & $k$ & Method & Win Rate & Optimality Ratio \\ 
\midrule

\multirow{2}{*}{20} & \multirow{2}{*}{10} & \multirow{2}{*}{5} 
  & DivPrune & 60.14 & 74.53 \\
  & & & \cellcolor[gray]{0.9}\textbf{OTPrune} & \cellcolor[gray]{0.9}\textbf{94.56} & \cellcolor[gray]{0.9}\textbf{87.81} \\
\midrule

\multirow{2}{*}{30} & \multirow{2}{*}{20} & \multirow{2}{*}{10} 
  & DivPrune & 84.62 & 76.48 \\
  & & & \cellcolor[gray]{0.9}\textbf{OTPrune} & \cellcolor[gray]{0.9}\textbf{99.49} & \cellcolor[gray]{0.9}\textbf{93.98} \\
\midrule

\multirow{2}{*}{100} & \multirow{2}{*}{50} & \multirow{2}{*}{30} 
  & DivPrune & 88.77 & 92.20 \\
  & & & \cellcolor[gray]{0.9}\textbf{OTPrune} & \cellcolor[gray]{0.9}\textbf{99.99} & \cellcolor[gray]{0.9}\textbf{100.00} \\
\midrule

\multirow{2}{*}{100} & \multirow{2}{*}{80} & \multirow{2}{*}{50} 
  & DivPrune & 92.14 & 95.80 \\
  & & & \cellcolor[gray]{0.9}\textbf{OTPrune} & \cellcolor[gray]{0.9}\textbf{100.00} & \cellcolor[gray]{0.9}\textbf{100.00} \\
\midrule

\multirow{2}{*}{1000} & \multirow{2}{*}{256} & \multirow{2}{*}{100} 
  & DivPrune & 97.59 & 97.71 \\
  & & & \cellcolor[gray]{0.9}\textbf{OTPrune} & \cellcolor[gray]{0.9}\textbf{100.00} & \cellcolor[gray]{0.9}\textbf{100.00} \\
\bottomrule
\end{tabular}%
}
\end{table}

Table~\ref{tab:small-exp1} summarizes the results across different synthetic configurations. For each setting, we compare \textit{OTPrune} and \textit{DivPrune}, and include the implicit random selection baseline, whose expected performance equals the mean of $f(C)$ across all subsets. 
Both methods substantially outperform random selection, achieving high win rates and optimality ratios that confirm the effectiveness of structured token selection over naive sampling. 
However, OTPrune consistently achieves markedly higher scores than DivPrune, particularly in low-dimensional or small-sample regimes where diversity-only criteria are less reliable. 
Even when the search space becomes large, OTPrune maintains near-perfect performance (\textit{e.g.}, achieving 100\% win rate and 100\% optimality ratio in high-dimensional cases), demonstrating that our submodular approximation closely aligns with the original OT objective. 
Importantly, both methods empirically achieve optimality ratios exceeding the theoretical $(1 - 1/e)$ bound guaranteed for the maximization problem, indicating that the bound is conservative in practice. 
Nevertheless, OTPrune consistently attains ratios much closer to 1.0 across all settings, showing that its distribution-aware formulation more effectively captures global structure and achieves near-optimal performance. 
These results empirically confirm that modeling distributional alignment yields stronger representational coverage and robustness than diversity-only strategies.

\section{Experiment}
In this section, we will delve into a comprehensive comparison of our proposed method with prior works on downstream tasks.
Additionally, we give an analysis of the efficiency of OTPrune and include an ablation study to examine its key components.

\subsection{Experimental Settings}\label{sec:exp_setup}

\noindent\textbf{Datasets.} To rigorously evaluate our framework, we employ 11 distinct datasets covering a broad spectrum of multimodal reasoning and understanding benchmarks: ScienceQA-IMG (SQA)~\cite{sqa-i}, POPE~\cite{pope}, MME~\cite{mme}, MMB~\cite{mmb}, GQA~\cite{gqa}, MMMU~\cite{MMMU}, Flicker30k~\cite{flicker}, SeedBench (SEEDB)~\cite{SeedBench}, Nocaps~\cite{nocaps}, OKVQA~\cite{okvqa}, and COCO-2017~\cite{coco}. This diverse collection allows us to assess capabilities across varied domains, ranging from image captioning to both multiple-choice and open-ended visual question answering scenarios.

\noindent\textbf{Evaluation metrics.} Following established evaluation protocols~\cite{divprune}, we measure image captioning quality using the CIDEr metric~\cite{cider}. For question-answering paradigms, model outputs are quantified via Exact Match (EM), Accuracy (Acc), F1, Perception Score (P-score)~\cite{mme}, and GPT-assisted evaluations~\cite{gptscore}. Additionally, open-ended QA responses are judged utilizing the Wu-Palmer similarity (WUPS)~\cite{wups} alongside GPT-assisted scoring~\cite{gptscore}. Throughout our analysis, ascending values in these metrics consistently reflect superior predictive capability. To gauge efficiency, similar to previous studies~\cite{fastv,FitPrune,vtw}, computational cost is recorded in TFLOPs (aligning with \cite{divprune}), where lower numbers designate a more efficient process.

\noindent\textbf{Baselines and Models.} We compare OTPrune against several established token pruning techniques~\cite{divprune}, specifically FastV~\cite{fastv}, PruMerge~\cite{PruMerge}, VTW~\cite{vtw}, FitPrune~\cite{FitPrune}, $\text{M}^3$~\cite{m3}, and DivPrune~\cite{divprune}. Our primary analysis focuses on FastV, VTW, DivPrune, and PruMerge as our \textbf{{main competitors}}, as these offer plug-and-play functionality without the overhead of subsequent fine-tuning or dataset calibration. Within this group, FastV, VTW, and DivPrune operate under a \textit{one-shot} paradigm (utilizing a globally shared pruning ratio), while PruMerge functions as an \textit{adaptive} method (determining token retention dynamically per image). To ensure a comprehensive evaluation, we also provide comparative metrics against FitPrune (a calibration-dependent approach) and $\text{M}^3$ (a fine-tuning-based framework). It is important to highlight that VTW intrinsically relies on a calibration step to pinpoint the optimal pruning layer. Since this standard behavior prevents alignment with a fixed TFLOP target, we intentionally deactivate VTW's calibration phase in our tests. Instead, we manually select a layer that forces VTW to satisfy the specific FLOP constraints of the given experiment, ensuring a fair, resource-matched comparison.

\noindent\textbf{Implementation.} To illustrate the broad applicability and generality of OTPrune, we deploy our algorithm and competing baselines across multiple widely used LMM architectures, including LLaVA 1.5-7B~\cite{llava_1.5}\footnote{https://huggingface.co/liuhaotian/llava-v1.5-7B}, LLaVA 1.5-13B~\cite{llava_1.5}\footnote{https://huggingface.co/liuhaotian/llava-v1.5-13b}, and LLaVA 1.6-7B\footnote{https://huggingface.co/liuhaotian/llava-v1.6-vicuna-7b} (alternatively referred to as LLaVA-NeXT~\cite{llavanext}). In our tabulated results, we only display the baseline comparisons that are directly compatible with the specific model and task combination being evaluated. Architecturally, every LMM in our study incorporates the CLIP vision encoder~\cite{clip}. While the LLaVA 1.5 variant maps images into a fixed grid of 576 tokens, LLaVA 1.6 scales the patch extraction dynamically, yielding a token sequence approximately 3 to 5 times longer than its predecessor. Hardware-wise, all benchmarks were executed on a single 1 $\times$ A100 GPU with 80GB VRAM. The inference and evaluation pipelines were constructed using the lmms-evals package~\cite{lmmseval}, with tests running uniformly at a batch size of 1. Finally, for automated assessments dependent on the ChatGPT API, queries were processed via the ``gpt-4o-mini" engine.

\subsection{Main Results}

Table~\ref{tbl:benchmark_comparison_image} summarizes the quantitative comparison of OTPrune against existing pruning approaches across 11 multimodal benchmarks.
Among training-free one-shot baselines (VTW~\cite{lin2025boosting}, FastV~\cite{fastv}, and DivPrune~\cite{divprune}), OTPrune consistently achieves the top rank. For LLaVA 1.5-7B, OTPrune improves the average rank from 3.23 (DivPrune) to 2.36 and the one-shot rank from 1.64 to 1.36, yielding higher CIDEr and QA scores at roughly 15\% of the original FLOPs. On LLaVA 1.5-13B, the average rank drops from 2.14 to 1.27 and the one-shot rank to 1.18. For LLaVA 1.6-7B, OTPrune maintains the best performance (Avg. Rank = 1.23) at approximately 11\% computation. 
These improvements confirm that distribution-aware selection via optimal transport yields better semantic coverage than diversity-only pruning. Adaptive strategies such as PruMerge~\cite{PruMerge} and DivPrune* remain directly comparable, as they require no additional training. 
OTPrune achieves comparable or higher accuracy while maintaining consistent efficiency, indicating that distributional alignment can provide the same adaptivity benefits without explicit layer- or ratio-specific calibration.
Approaches such as FitPrune~\cite{FitPrune} and M\textsuperscript{3}~\cite{m3} involve additional training or external calibration datasets, making them not strictly comparable to our one-shot setting. 
Nevertheless, OTPrune attains similar or superior task performance without any retraining, underscoring its practicality as a fully training-free alternative.




\renewcommand{\arraystretch}{1.03} 

\noindent\textbf{Statistic Test.} To assess the consistency and significance of performance differences across multiple datasets and evaluation metrics, we conducted a non-parametric statistical analysis based on the Wilcoxon signed-rank test. Specifically, we computed the \textit{relative} percentage improvement of every metric with respect to its baseline configuration, i.e., Divprune. The Wilcoxon signed-rank test was then applied across all metrics to evaluate whether the median improvement was significantly greater than zero (one-sided test). 
\begin{wraptable}[4]{r}{0.5\linewidth}
\centering
\label{tab:sts_main}
\resizebox{0.99\linewidth}{!}{%
\begin{tabular}{c|c}\toprule
Comparison  & \textit{p}-value \\ \midrule
\textit{OTPrune} vs \textit{DivPrune} & 0.028 \\\bottomrule
\end{tabular}%
}
\end{wraptable}
As shown in the right table, our method achieves a \textit{p}-value of 0.028, indicating that it statistically outperforms {DivPrune}, the previous state-of-the-art.

In summary, OTPrune delivers the strongest accuracy--efficiency trade-off among all directly comparable pruning methods and remains competitive even against fine-tuned variants. 

\begin{figure*}
    \centering
    \includegraphics[width=1\linewidth]{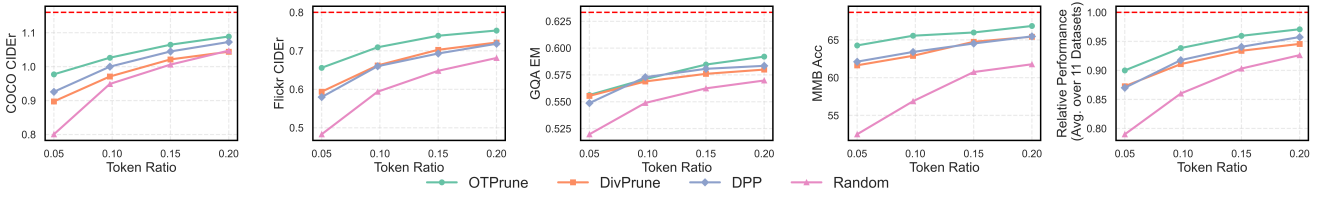}
    \caption{\textbf{Comparison with diversity-based methods using LLaVA~1.5-13B.} 
We evaluate \textit{OTPrune}, \textit{DivPrune}~\cite{divprune}, \textit{DPP}~\cite{kulesza2012determinantal}, and \textit{Random sampling} across 11 multimodal benchmarks under different pruning ratios $\{0.05,\,0.098,\,0.15,\,0.2\}$. 
We report both absolute performance and \textit{relative performance}, defined as the ratio between pruned and original model performance (\textit{pruned/original}). 
We also compute the OT distance between each subset and the full token set and visualize the \textit{relative distance} normalized by OTPrune, i.e., $f(\mathcal{C}_{\text{method}})/f(\mathcal{C}_{\text{OTPrune}})$. 
 }
    \label{fig:Performance_aba}
\end{figure*}

\begin{figure*}
    \centering
    \includegraphics[width=1\linewidth]{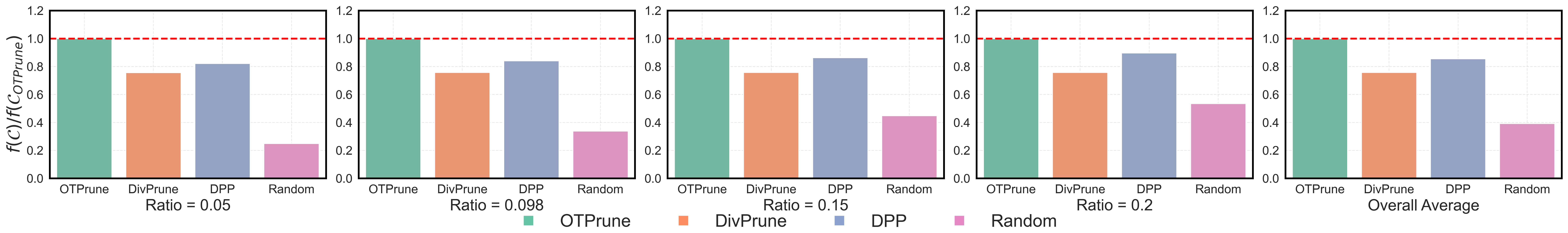}
    \caption{\textbf{Relative OT distance comparison across pruning methods.} 
We evaluate \textit{OTPrune}, \textit{DivPrune}~\cite{divprune}, \textit{DPP}~\cite{kulesza2012determinantal}, and \textit{Random sampling} using LLaVA~1.5-13B across 11 multimodal benchmarks under different token ratios $\{0.05,\,0.098,\,0.15,\,0.2\}$. 
Each bar shows the relative OT distance normalized by OTPrune, i.e., $f(\mathcal{C}_{\text{method}})/f(\mathcal{C}_{\text{OTPrune}})$, where lower values indicate closer alignment with the original token distribution. 
\textbf{OTPrune consistently achieves the smallest OT distance across all ratios, demonstrating superior distributional fidelity. 
The last panel reports the overall average across all 11 datasets.}}
    \label{fig:Relative_Performance_to_OTPrune_with_Avg}
\end{figure*}

\begin{figure}
    \centering
    \includegraphics[width=0.9\linewidth]{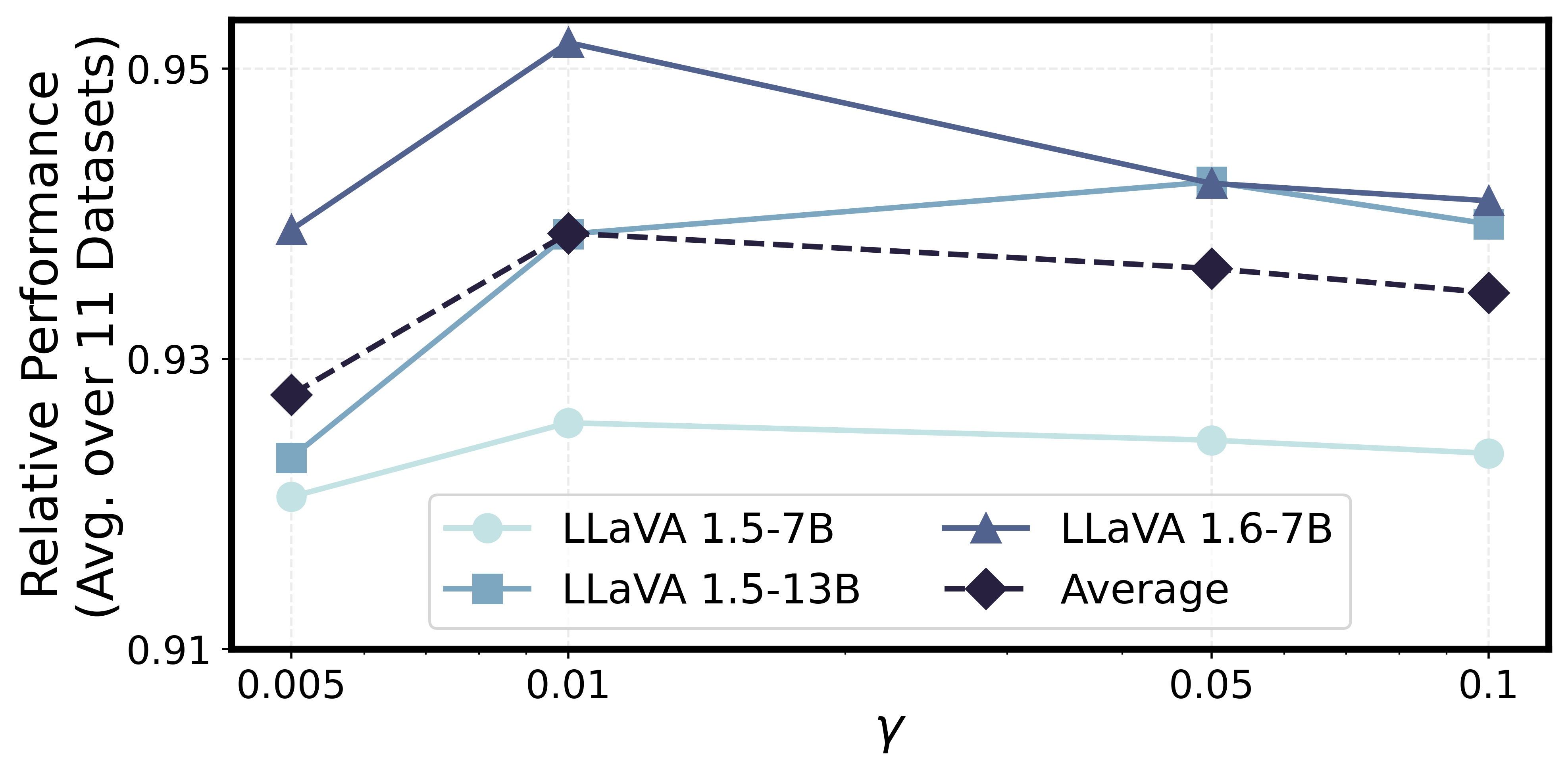}
    \caption{Sensitivity analysis of the balancing coefficient $\gamma$.
}
    \label{fig:pruning_relative_performance_curve_sns}
\end{figure}

\subsection{Analysis}

\noindent\textbf{Comparison with Diversity-based Methods.} 
For a comprehensive evaluation, we conduct experiments using LLaVA~1.5-13B across 11 multimodal benchmarks with varying token ratios $\{0.05,\,0.098,\,0.15,\,0.2\}$. 
In addition to \textit{OTPrune} and the state-of-the-art \textit{DivPrune}~\cite{divprune}, we include the widely used diversity-based selection method, Determinantal Point Process (\textit{DPP})~\cite{kulesza2012determinantal}, as well as \textit{Random sampling}. 
We report both the absolute task performance for each dataset and the \textit{relative performance}, defined as the ratio between pruned and original model performance (\textit{pruned/original}), to account for scale differences across metrics. 
To further validate our hypothesis, we compute the OT distance between each selected subset and the full token set. 
For readability, we present the \textit{relative distance} normalized by OTPrune, i.e., $f(\mathcal{C}_{\text{method}})/f(\mathcal{C}_{\text{OTPrune}})$. 
As shown in Fig.~\ref{fig:Performance_aba} and Fig.~\ref{fig:Relative_Performance_to_OTPrune_with_Avg}, OTPrune consistently achieves the lowest OT distance and highest relative performance across all token ratios, with clear advantages that persist even in the averaged results over 11 datasets.

\noindent\textbf{Sensitivity of $\gamma$.} 
We further investigate the sensitivity of the balancing coefficient $\gamma$ in Eq.~(\ref{eq:new_objective}) that controls the trade-off between the OT loss and the token selection regularization. 
We experiment with $\gamma \in \{0.005,\, 0.01,\, 0.05,\, 0.1\}$ using LLaVA~1.5-7B, LLaVA~1.5-13B, and LLaVA~1.6-7B across 11 multimodal benchmarks. 
As shown in Fig.~\ref{fig:pruning_relative_performance_curve_sns}, the overall trend remains stable across models, demonstrating that OTPrune is robust to the choice of $\gamma$. 
Performance slightly improves when $\gamma$ increases from 0.005 to 0.01, indicating a beneficial balance between distributional alignment and sparsity, and then plateaus for larger values. 
This suggests that OTPrune does not require extensive hyperparameter tuning, and $\gamma = 0.01$ serves as a good default across model scales.

\section{Conclusion}
In this work, we proposed OTPrune, a principled token pruning framework formulated as an optimal transport problem.
Unlike prior task-specific or heuristic pruning approaches, OTPrune aligns the distribution of pruned tokens with that of the full token set, offering a task-agnostic and geometry-aware objective.
We show that minimizing the Wasserstein distance effectively preserves the representational structure of visual-language models.
Extensive experiments on 11 multimodal reasoning and understanding benchmarks validate that better distributional approximation yields stronger performance.
OTPrune consistently outperforms diversity-based and random baselines across pruning ratios and model scales.
Our analysis further shows that OT distance correlates strongly with downstream accuracy, providing an interpretable metric for pruning quality.
The method is robust to the trade-off parameter $\gamma$, requiring minimal tuning.
Moreover, OTPrune generalizes well across LLaVA variants, confirming its scalability and broad applicability.
We hope this work inspires further research on distribution-aware pruning that improves efficiency without compromising performance in MLLMs.

\section*{Acknowledgments}
\label{sec:Acknowledgement}

This material is based upon the work supported by the National Science Foundation under Grant Number CNS-2204721.

{
    \small
    \bibliographystyle{ieeenat_fullname}
    \bibliography{main}
}


\appendix
\section{Proofs}

\subsection{Proof of Inequality}
Recap, for a positive semidefinite square matrix $\boldsymbol{X}$, the operator $\Psi(\boldsymbol{\cdot})$ is presented as:
\begin{equation}
\Psi(\boldsymbol{X}) = \log\det(\boldsymbol{I} + \gamma \boldsymbol{X})
= \sum_{i=1}^{d} \log(1 + \gamma \lambda_i),
\end{equation}
where $\{\lambda_i\}$ are the eigenvalues of $\boldsymbol{X}$ and $\gamma > 0$ is a fixed scaling parameter. 
Since $\boldsymbol{X}$ is positive semidefinite (so $\lambda_i \ge 0$), we have the following inequalities,
\begin{equation}
\Psi(\boldsymbol{X}) \le \gamma\,\operatorname{tr}(\boldsymbol{X})\leq \gamma (\operatorname{tr}(\boldsymbol{X}^{1/2}))^2.
\end{equation}
Now, we will show its proof as follows.

\begin{proof}
 Let $\boldsymbol{X}\succeq \boldsymbol{0}$ with non-negative eigenvalues $\{\lambda_i\}$.
For the first inequality in~\eqref{eq:approx}, use the elementary bound $\log(1+t)\le t$ for all $t\ge 0$. 
Since $\gamma\lambda_i\ge 0$,
\[
\log(1+\gamma\lambda_i)\le \gamma\lambda_i
\quad\Rightarrow\quad
\Psi(\boldsymbol{X}) \le \gamma\sum_{i=1}^d \lambda_i
= \gamma\,\operatorname{tr}(\boldsymbol{X}).
\]

For the second inequality in~\eqref{eq:approx}, note that
\begin{align*}
\bigl(\operatorname{tr}(\boldsymbol{X}^{1/2})\bigr)^2
&= \Bigl(\sum_{i=1}^d \sqrt{\lambda_i}\Bigr)^2 \\
&= \sum_{i=1}^d \lambda_i + 2\sum_{1\le i<j\le d} \sqrt{\lambda_i\lambda_j} \\
&\;\ge\; \sum_{i=1}^d \lambda_i 
= \operatorname{tr}(\boldsymbol{X}),
\end{align*}
because each term $\sqrt{\lambda_i\lambda_j}$ is nonnegative. Multiplying both sides by $\gamma>0$ gives
\[
\gamma\,\operatorname{tr}(\boldsymbol{X})
\;\le\;
\gamma\bigl(\operatorname{tr}(\boldsymbol{X}^{1/2})\bigr)^2.
\]

Combining the two inequalities yields
\[
\Psi(\boldsymbol{X})
\;\le\; \gamma\,\operatorname{tr}(\boldsymbol{X})
\;\le\; \gamma\bigl(\operatorname{tr}(\boldsymbol{X}^{1/2})\bigr)^2,
\]
as claimed.
\end{proof}

\subsection{Proof of Submodularity}

\begin{definition}[Submodularity]
A set function $h: 2^{[m]} \to \mathbb{R}$ is \emph{submodular} if it satisfies
the diminishing-returns property: for any $\mathcal{S}\subseteq\mathcal{T}\subseteq [m]$
and any $i\notin\mathcal{T}$,
\[
h(\mathcal{S}\cup\{i\}) - h(\mathcal{S})
\;\ge\;
h(\mathcal{T}\cup\{i\}) - h(\mathcal{T}).
\]
\end{definition}

\begin{lemma}\label{lemma:1}
(\cite{kulesza2012determinantal,krause2007near,bach2013learning}) Let $\boldsymbol{A} \in \mathbb{R}^{m \times m}$ be a positive definite matrix, 
and for any subset $S \subseteq \{1,\dots,m\}$ let $\boldsymbol{A}_S$ denote the 
corresponding principal submatrix. Define the set function
\[
    h(S) = \log \det(\boldsymbol{A}_S).
\]
Then $h$ is a submodular set function.
\end{lemma}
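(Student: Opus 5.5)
The approach is to reduce the diminishing-returns inequality in the Definition to a statement about Schur complements, and then use the fact that conditioning on more variables can only shrink a conditional variance. Fix $S\subseteq T\subseteq[m]$ and $i\notin T$. Since $\boldsymbol{A}\succ 0$, every principal submatrix $\boldsymbol{A}_S$ is positive definite, so $h$ is well defined; we adopt the convention $h(\emptyset)=\log\det(\boldsymbol{A}_\emptyset)=0$. The block determinant formula then gives
\[
h(S\cup\{i\})-h(S)=\log\bigl(A_{ii}-\boldsymbol{A}_{iS}\boldsymbol{A}_S^{-1}\boldsymbol{A}_{Si}\bigr),
\]
where the Schur complement in parentheses is strictly positive. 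The same identity holds with $T$ in place of $S$. Since $\log$ is increasing, submodularity reduces to the scalar inequality
\[
A_{ii}-\boldsymbol{A}_{iS}\boldsymbol{A}_S^{-1}\boldsymbol{A}_{Si}\;\ge\;A_{ii}-\boldsymbol{A}_{iT}\boldsymbol{A}_T^{-1}\boldsymbol{A}_{Ti}.
\]

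To prove this inequality, the cleanest route is a Gram-matrix argument. Factor $\boldsymbol{A}=\boldsymbol{B}^\top\boldsymbol{B}$, for instance with $\boldsymbol{B}=\boldsymbol{A}^{1/2}$, and let $\boldsymbol{b}_j$ denote the columns of $\boldsymbol{B}$, so that $A_{jl}=\langle\boldsymbol{b}_j,\boldsymbol{b}_l\rangle$. Then $\boldsymbol{A}_{iS}\boldsymbol{A}_S^{-1}\boldsymbol{A}_{Si}=\|P_S\boldsymbol{b}_i\|^2$, where $P_S$ is the orthogonal projection onto $\mathrm{span}\{\boldsymbol{b}_j:j\in S\}$. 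This is just the normal-equation formula for least-squares projection, and it applies because the $\boldsymbol{b}_j$ are linearly independent when $\boldsymbol{A}\succ0$. Consequently the Schur complement equals $\|\boldsymbol{b}_i-P_S\boldsymbol{b}_i\|^2=\operatorname{dist}(\boldsymbol{b}_i,\mathrm{span}_S)^2$. Because $S\subseteq T$, we have $\mathrm{span}_S\subseteq\mathrm{span}_T$, so the distance to the larger subspace is no larger. This yields the required inequality, and taking logs completes the proof. An equivalent phrasing is probabilistic: with $x\sim\mathcal{N}(0,\boldsymbol{A})$, the Schur complement is $\operatorname{Var}(x_i\mid x_S)$, and conditioning on more variables cannot increase a Gaussian conditional variance.

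The only step that needs care is the identification of the Schur complement with a squared projection residual, namely verifying $\boldsymbol{A}_{iS}\boldsymbol{A}_S^{-1}\boldsymbol{A}_{Si}=\|P_S\boldsymbol{b}_i\|^2$. This is routine: write $P_S=\boldsymbol{B}_S(\boldsymbol{B}_S^\top\boldsymbol{B}_S)^{-1}\boldsymbol{B}_S^\top$ and expand. The edge case $S=\emptyset$, where the marginal gain is $\log A_{ii}$, is covered by the same formula with $P_\emptyset=0$. Finally, I would remark that in the application $\boldsymbol{A}=\boldsymbol{I}+\tilde\gamma\tilde{\boldsymbol{V}}\tilde{\boldsymbol{V}}^\top$ is positive definite, so the lemma applies directly to $\tilde f$.
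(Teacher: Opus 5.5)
Your proof is correct and complete. The block-determinant identity turns each marginal gain into the logarithm of a Schur complement. The factorization $\boldsymbol{A}=\boldsymbol{B}^\top\boldsymbol{B}$ identifies that Schur complement with $\operatorname{dist}(\boldsymbol{b}_i,\mathrm{span}\{\boldsymbol{b}_j : j\in S\})^2$, and enlarging $S$ to $T$ can only shrink this distance.

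The paper gives no proof of Lemma~\ref{lemma:1} at all. It imports the statement from \cite{kulesza2012determinantal,krause2007near,bach2013learning} and uses it as a black box in the Proposition. Your argument is the standard one behind those citations: the squared-volume picture for determinantal kernels or, in your probabilistic phrasing, the fact that conditioning cannot increase Gaussian entropy. So it makes the appendix self-contained at no extra cost.

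It also ties into the rest of the paper in a way the citation does not. Your Schur complement is exactly the greedy score $d_j^2$ of Appendix~B. For the kernel actually used, $K=\boldsymbol{I}+\tilde{\gamma}\tilde{\boldsymbol{V}}\tilde{\boldsymbol{V}}^\top$, you can take $\boldsymbol{b}_j=(\boldsymbol{e}_j,\sqrt{\tilde{\gamma}}\,\boldsymbol{w}_j)$, with $\boldsymbol{e}_j$ the standard basis vectors of $\mathbb{R}^m$. Every vector in $\mathrm{span}\{\boldsymbol{b}_j : j\in S\}$ then has first block orthogonal to $\boldsymbol{e}_i$ when $i\notin S$. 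Hence the residual is at least $1$, and every marginal gain $\log d_i^2$ is nonnegative.

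This gives the monotonicity that the main text claims and that the $(1-1/e)$ guarantee requires, but that the appendix never proves; the Proposition asserts only normalization and submodularity. You were right not to put monotonicity into the lemma itself, since it fails for a general positive definite $\boldsymbol{A}$ with some $A_{ii}<1$.
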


\begin{prop}
 Let $\boldsymbol{V}\in\mathbb{R}^{m\times d}$ and let $\boldsymbol{V}_{\mathcal{C}}$ denote
the submatrix of $\boldsymbol{V}$ consisting of rows indexed by $\mathcal{C}\subseteq[m]$.
Define
\[
\Gamma(\mathcal{C})
=
\log \det\!\Big(
\boldsymbol{I}
+ \tilde{\gamma}\,
\boldsymbol{V}_{\mathcal{C}}
\boldsymbol{V}^\top
\boldsymbol{V}
\boldsymbol{V}_{\mathcal{C}}^\top
\Big),
\qquad \tilde{\gamma}>0.
\]
Then $\Gamma$ is normalized and submodular.   
\end{prop}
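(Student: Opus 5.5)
The plan is to write $\Gamma$ as the log-determinant of a principal submatrix of a single fixed positive definite matrix and then invoke Lemma~\ref{lemma:1}. Define the Gram-type kernel
\[
\boldsymbol{L} = \tilde{\gamma}\,\boldsymbol{V}\boldsymbol{V}^\top\boldsymbol{V}\boldsymbol{V}^\top = \tilde{\gamma}\,\tilde{\boldsymbol{V}}\tilde{\boldsymbol{V}}^\top \in \mathbb{R}^{m\times m},
\qquad \tilde{\boldsymbol{V}} = \boldsymbol{V}\boldsymbol{V}^\top .
\]
Its rows are $\tilde{\gamma}\langle \boldsymbol{w}_i,\boldsymbol{w}_j\rangle$, matching Algorithm~\ref{alg:GOT-solver}. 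Let $\boldsymbol{A} = \boldsymbol{I}_m + \boldsymbol{L}$. The first check is that the principal submatrix of $\boldsymbol{A}$ indexed by $\mathcal{C}$ is exactly the matrix inside the determinant. Selecting rows $\mathcal{C}$ of $\boldsymbol{V}$ corresponds to left multiplication by a selection matrix $\boldsymbol{E}_{\mathcal{C}}$, so $\boldsymbol{V}_{\mathcal{C}} = \boldsymbol{E}_{\mathcal{C}}\boldsymbol{V}$. Then
\[
\boldsymbol{A}_{\mathcal{C}} = \boldsymbol{E}_{\mathcal{C}}\boldsymbol{A}\boldsymbol{E}_{\mathcal{C}}^\top = \boldsymbol{I}_{|\mathcal{C}|} + \tilde{\gamma}\,\boldsymbol{V}_{\mathcal{C}}\boldsymbol{V}^\top\boldsymbol{V}\boldsymbol{V}_{\mathcal{C}}^\top ,
\]
which gives $\Gamma(\mathcal{C}) = \log\det(\boldsymbol{A}_{\mathcal{C}})$.

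\textbf{Positive definiteness.} Next, $\boldsymbol{A}$ must satisfy the hypothesis of Lemma~\ref{lemma:1}. Since $\boldsymbol{L} = \tilde{\gamma}\tilde{\boldsymbol{V}}\tilde{\boldsymbol{V}}^\top \succeq 0$ and $\tilde{\gamma}>0$, we have $\boldsymbol{A} \succeq \boldsymbol{I} \succ 0$. Lemma~\ref{lemma:1} then yields submodularity of $\Gamma$ directly, in the diminishing-returns form of the Definition.

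\textbf{Normalization.} We need $\Gamma(\emptyset)=0$, which follows from the convention $\det$ of the $0\times 0$ matrix $=1$. Equivalently, $\boldsymbol{V}_{\emptyset}$ is empty and the identity block is empty, so $\log 1 = 0$.

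\textbf{Monotonicity.} The main text also claims monotonicity, so I would add it although the proposition states only normalization and submodularity. Use the Schur complement: for $i\notin\mathcal{C}$,
\[
\det\boldsymbol{A}_{\mathcal{C}\cup\{i\}} = \det\boldsymbol{A}_{\mathcal{C}}\cdot\big(A_{ii} - \boldsymbol{a}_{i,\mathcal{C}}^\top\boldsymbol{A}_{\mathcal{C}}^{-1}\boldsymbol{a}_{i,\mathcal{C}}\big).
\]
The Schur complement equals $1 + (L_{ii} - \boldsymbol{l}^\top(\boldsymbol{I}+\boldsymbol{L}_{\mathcal{C}})^{-1}\boldsymbol{l}) \ge 1$, because $\boldsymbol{L}_{\mathcal{C}\cup\{i\}}\succeq 0$ implies $L_{ii} \ge \boldsymbol{l}^\top \boldsymbol{L}_{\mathcal{C}}^{+}\boldsymbol{l} \ge \boldsymbol{l}^\top(\boldsymbol{I}+\boldsymbol{L}_{\mathcal{C}})^{-1}\boldsymbol{l}$. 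Hence the marginal gain is $\ge \log 1 = 0$. This quantity is exactly the $d_i^2$ tracked in Algorithm~\ref{alg:GOT-solver}.

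\textbf{Main obstacle.} I expect the only delicate point to be the index bookkeeping in the first step: the identity in $\Gamma$ is $|\mathcal{C}|\times|\mathcal{C}|$, not $d\times d$, so the principal-submatrix identification must be written carefully. The Sylvester step in Eq.~\ref{eq:kernel} is not needed, since $\Gamma$ is already stated in the $|\mathcal{C}|$-dimensional form.
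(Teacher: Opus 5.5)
Your proposal is correct and follows essentially the same route as the paper: establish positive definiteness and then invoke Lemma~\ref{lemma:1}. Your version is somewhat tighter. You explicitly exhibit $\Gamma(\mathcal{C})=\log\det(\boldsymbol{A}_{\mathcal{C}})$ as a principal-submatrix log-determinant of the single fixed kernel $\boldsymbol{A}=\boldsymbol{I}_m+\tilde{\gamma}\tilde{\boldsymbol{V}}\tilde{\boldsymbol{V}}^\top$; the paper's proof only checks that each $|\mathcal{C}|\times|\mathcal{C}|$ matrix is positive definite and leaves this identification to Appendix~B. You also verify normalization, which the paper's proof does not address.
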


\begin{proof}
To prove this proposition, We need to first prove this:
Let $\boldsymbol{V} \in \mathbb{R}^{m \times d}$ be any matrix and 
$\mathcal{C} \subseteq [m]$ be an index set. Let $\boldsymbol{V}_{\mathcal{C}}$ 
denote the submatrix of $\boldsymbol{V}$ formed by the rows indexed by 
$\mathcal{C}$, and let $\tilde{\gamma} \ge 0$. Then the matrix
\[
    \boldsymbol{M}
    =
    \boldsymbol{I}_{|\mathcal{C}|}
    + \tilde{\gamma}\,
      \boldsymbol{V}_{\mathcal{C}}
      \boldsymbol{V}^\top
      \boldsymbol{V}
      \boldsymbol{V}_{\mathcal{C}}^\top
    \in \mathbb{R}^{|\mathcal{C}| \times |\mathcal{C}|}
\]
is symmetric positive definite.

Since $\boldsymbol{V}^\top \boldsymbol{V} \in \mathbb{R}^{d \times d}$ is 
symmetric positive semidefinite, define
\[
    \boldsymbol{A}
    =
    \boldsymbol{V}_{\mathcal{C}}
    \boldsymbol{V}^\top
    \boldsymbol{V}
    \boldsymbol{V}_{\mathcal{C}}^\top
    \in \mathbb{R}^{|\mathcal{C}| \times |\mathcal{C}|}.
\]
We first show that $\boldsymbol{A}$ is positive semidefinite. For any 
$\boldsymbol{x} \in \mathbb{R}^{|\mathcal{C}|}$,
\[
    \boldsymbol{x}^\top \boldsymbol{A} \boldsymbol{x}
    =
    \boldsymbol{x}^\top
    \boldsymbol{V}_{\mathcal{C}}
    \boldsymbol{V}^\top
    \boldsymbol{V}
    \boldsymbol{V}_{\mathcal{C}}^\top
    \boldsymbol{x}
    =
    \bigl(\boldsymbol{V}_{\mathcal{C}}^\top \boldsymbol{x}\bigr)^\top
    \boldsymbol{V}^\top \boldsymbol{V}
    \bigl(\boldsymbol{V}_{\mathcal{C}}^\top \boldsymbol{x}\bigr).
\]
Let $\boldsymbol{y} = \boldsymbol{V}_{\mathcal{C}}^\top \boldsymbol{x} \in \mathbb{R}^d$. Then
\[
    \boldsymbol{x}^\top \boldsymbol{A} \boldsymbol{x}
    =
    \boldsymbol{y}^\top \boldsymbol{V}^\top \boldsymbol{V} \boldsymbol{y}
    =
    \|\boldsymbol{V}\boldsymbol{y}\|_2^2
    \ge 0,
\]
so $\boldsymbol{A}$ is positive semidefinite.

Now consider
\[
    \boldsymbol{M}
    =
    \boldsymbol{I}_{|\mathcal{C}|}
    + \tilde{\gamma}\, \boldsymbol{A}.
\]
The matrix $\boldsymbol{M}$ is symmetric because both 
$\boldsymbol{I}_{|\mathcal{C}|}$ and $\boldsymbol{A}$ are symmetric.
For any nonzero $\boldsymbol{x} \in \mathbb{R}^{|\mathcal{C}|}$ we have
\[
    \boldsymbol{x}^\top \boldsymbol{M} \boldsymbol{x}
    =
    \boldsymbol{x}^\top \boldsymbol{I}_{|\mathcal{C}|} \boldsymbol{x}
    + \tilde{\gamma}\, \boldsymbol{x}^\top \boldsymbol{A} \boldsymbol{x}
    =
    \|\boldsymbol{x}\|_2^2
    + \tilde{\gamma}\, \boldsymbol{x}^\top \boldsymbol{A} \boldsymbol{x}.
\]
Since $\boldsymbol{A}$ is positive semidefinite and $\tilde{\gamma} \ge 0$, we have
$\boldsymbol{x}^\top \boldsymbol{A} \boldsymbol{x} \ge 0$, so
\[
    \boldsymbol{x}^\top \boldsymbol{M} \boldsymbol{x}
    \ge
    \|\boldsymbol{x}\|_2^2
    > 0
\]
for all $\boldsymbol{x} \neq \boldsymbol{0}$. Therefore $\boldsymbol{M}$ is 
positive definite.

Then we can apply Lemma~\ref{lemma:1}, and we can conclude $\Gamma$ is sumdoular.

\end{proof}

\section{Detail of Algorithm}

In this appendix we describe how our pruning objective naturally leads to the OTPrune algorithm.  
The derivation begins directly with the token--token interaction matrix and proceeds to the incremental update rules used in Algorithm~1, without introducing additional notation beyond what is necessary.

Given the vision token matrix $\boldsymbol V \in \mathbb R^{m\times d}$,  
we first construct a token--token interaction matrix
\[
\tilde{\boldsymbol V} = \boldsymbol V \boldsymbol V^\top \in \mathbb R^{m\times m}.
\]
The $i$-th row of this matrix,
\[
\boldsymbol w_i^\top = \tilde{\boldsymbol V}[i,:],
\]
captures how token $i$ interacts with all other tokens.  
For a subset $\mathcal C \subseteq [m]$, we use
\[
\tilde{\boldsymbol V}_{\mathcal C} = \tilde{\boldsymbol V}[\mathcal C,:]
\]
to denote the corresponding row submatrix.

Our pruning objective is the log-determinant
\[
\Gamma(\mathcal C)
=
\log \det\!\Big(
\boldsymbol I
+
\tilde{\gamma}\,
\tilde{\boldsymbol V}_{\mathcal C}
\tilde{\boldsymbol V}_{\mathcal C}^\top
\Big),
\qquad
\tilde{\gamma}>0.
\]
This quantity becomes large when the selected interaction vectors 
$\{\boldsymbol w_i : i\in\mathcal C\}$ span a diverse subspace.  
It is convenient to define the kernel
\[
K
=
\boldsymbol I
+
\tilde{\gamma}\,
\tilde{\boldsymbol V}\,
\tilde{\boldsymbol V}^\top,
\quad\text{so that}\quad
K_{ij}
=
\delta_{ij}
+
\tilde{\gamma}\,
\langle \boldsymbol w_i,\boldsymbol w_j\rangle.
\]
Here, the Kronecker delta $\delta_{ij}$ equals $1$ if $i=j$ and $0$ otherwise.
Then $\Gamma(\mathcal C)=\log\det(K_{\mathcal C})$,  
where $K_{\mathcal C}$ is the principal submatrix indexed by $\mathcal C$.

\vspace{1em}
\noindent\textbf{Greedy marginal gain.}
When considering a new token $j\notin\mathcal C$, the determinant of the enlarged matrix can be written using the classical one-step determinant identity:
\[
\det(K_{\mathcal C\cup\{j\}})
=
\det(K_{\mathcal C})\cdot
\Big(
K_{jj}
-
K_{\mathcal C j}^\top
K_{\mathcal C}^{-1}
K_{\mathcal C j}
\Big).
\]
Thus the improvement in the objective is
\[
\Gamma(\mathcal C\cup\{j\})-\Gamma(\mathcal C)
=
\log d_j^2,
\qquad
d_j^2
=
K_{jj}
-
K_{\mathcal C j}^\top
K_{\mathcal C}^{-1}
K_{\mathcal C j}.
\]
The greedy algorithm simply selects the token with the largest $d_j^2$.

\vspace{1em}
\noindent\textbf{Cholesky representation.}
Let the Cholesky factor of the current kernel submatrix be
\[
K_{\mathcal C} = LL^\top.
\]
Define the coefficient vector
\[
\boldsymbol c_j = L^{-1} K_{\mathcal C j}.
\]
Because $K_{\mathcal C}^{-1}=(L^{\top})^{-1} L^{-1}$, the quadratic form simplifies to
\[
K_{\mathcal C j}^\top K_{\mathcal C}^{-1} K_{\mathcal C j}
=
\|\boldsymbol c_j\|^2,
\]
so the greedy score becomes
\[
d_j^2 = K_{jj} - \|\boldsymbol c_j\|^2.
\]
Thus maintaining the greedy scores reduces to tracking $\boldsymbol c_i$ and $d_i^2$ for all unselected tokens.

\vspace{1em}
\noindent\textbf{Adding a token.}
Suppose token $j$ is selected.  
The Cholesky factor of the enlarged set $\mathcal C\cup\{j\}$ takes the form
\[
L' =
\begin{pmatrix}
L & 0 \\
\boldsymbol c_j^\top & d_j
\end{pmatrix}.
\]
For any remaining token $i\notin\mathcal C\cup\{j\}$,  
let the updated coefficient vector be
\[
\boldsymbol c_i' =
\begin{bmatrix}
\boldsymbol c_i \\[3pt]
e_i
\end{bmatrix}.
\]
Plugging this into $L'\boldsymbol c_i' = K_{\mathcal C\cup\{j\},\, i}$ yields
\[
e_i
=
\frac{
K_{ji}
-
\langle \boldsymbol c_j,\boldsymbol c_i\rangle
}{d_j}.
\]
Once this new coordinate is known, the updated greedy score becomes
\[
d_i'^2
=
K_{ii} - \|\boldsymbol c_i'\|^2
=
d_i^2 - e_i^2.
\]

In general, the update takes the form
\[
e_i = \frac{K_{ji} - \langle \boldsymbol c_j,\boldsymbol c_i\rangle}{d_j}.
\]
For our kernel $K_{ji} = \delta_{ji} + \tilde{\gamma}\langle \boldsymbol w_j,\boldsymbol w_i\rangle$.
Since we only update $i \neq j$ (token $j$ has just been selected), the Kronecker
delta term vanishes and we obtain
\[
e_i = \frac{\tilde{\gamma}\langle \boldsymbol w_j,\boldsymbol w_i\rangle
- \langle \boldsymbol c_j,\boldsymbol c_i\rangle}{d_j}.
\]

This expression exactly the operations executed in Algorithm~1.

\vspace{1em}
\noindent\textbf{Initialization and complexity.}
With $\mathcal C=\emptyset$, all coefficient vectors are empty and the initial
scores reduce to
\[
d_i^2 = K_{ii} = 1 + \tilde{\gamma}\,\|\boldsymbol w_i\|^2.
\]
At iteration $t$, each of the $(m-t)$ remaining items requires an
$O(t)$ inner product $\langle \boldsymbol c_j,\boldsymbol c_i\rangle$,
so the overall cost of the iteration is
\[
O\big((m-t)t\big).
\]
Summing over $t=1,\dots,k$ gives
\[
\sum_{t=1}^k O\big((m-t)t\big) = O(mk^2),
\]
which is the total complexity of selecting $k$ tokens.

This completes the derivation of OTPrune directly from the log-determinant objective.

\section{Discussion on Non-uniform Weighting Strategies}
\label{sec:appendix_non_uniform}

In our current framework, we adopt a \textit{uniform weight assumption} for the source distribution $P$, where each token $v_i \in \boldsymbol{V}$ is assigned an equal mass of $1/M$. While this simplification allows for a highly efficient and training-free selection process, it treats all visual regions as equally significant, which may not fully capture the semantic variance inherent in complex images.

\textit{Potential for Adaptive Importance.} Theoretically, the optimal transport formulation in \textit{OTPrune} can be generalized to accommodate \textit{non-uniform weights} by assigning an importance score $w_i$ to each token, such that $\sum w_i = 1$. As suggested during the rebuttal, such weights could be derived from internal model signals to better reflect semantic density:
\begin{itemize}
    \item \textit{Attention-based Priors}: Leveraging the cumulative attention weights from preceding transformer layers to identify patches that the model naturally prioritizes.
    \item \textit{Task-specific Significance}: Utilizing cross-modal alignment scores to assign higher weights to visual tokens that are most relevant to the input text prompt.
\end{itemize}

\noindent\textit{Future Challenges.} Transitioning to a non-uniform weighting strategy transforms \textit{OTPrune} into a more complex joint optimization problem involving both subset selection and mass allocation. We consider the exploration of \textit{soft-weighted OTPrune}, where token weights are dynamically adjusted during the pruning process—as a promising direction for further improving performance at extremely low pruning ratios.

\end{document}